\newtheorem{definition}{Definition}\setcounter{definition}{-1}
\newtheorem{theorem}{Theorem}
\def\BibTeX{{\rm B\kern-.05em{\sc i\kern-.025em b}\kern-.08em
    T\kern-.1667em\lower.7ex\hbox{E}\kern-.125emX}}
\begin{document}


\title{A Systems-Theoretical Formalization of Closed Systems}

\author{
    \IEEEauthorblockN{
        Niloofar Shadab\IEEEauthorrefmark{1}, Tyler Cody\IEEEauthorrefmark{1}\IEEEauthorrefmark{2}, Alejandro Salado\IEEEauthorrefmark{3}, Peter Beling\IEEEauthorrefmark{1}\IEEEauthorrefmark{2}
    }
    \IEEEauthorblockA{\IEEEauthorrefmark{1} Grado Department of Industrial and Systems Engineering, Virginia Tech}
    \IEEEauthorblockA{\IEEEauthorrefmark{2} National Security Institute, Virginia Tech}
    \IEEEauthorblockA{\IEEEauthorrefmark{3} Department of Systems and Industrial Engineering, The University of Arizona}
}
\corresp{Corresponding author: Niloofar Shadab (e-mail: nshadab@vt.edu).}

\begin{abstract}
     There is a lack of formalism for some key foundational concepts in systems engineering. One of the most recently acknowledged deficits is the inadequacy of systems engineering practices for engineering intelligent systems. In our previous works, we proposed that closed systems precepts could be used to accomplish a required paradigm shift for the systems engineering of intelligent systems. 
     However, to enable such a shift, formal foundations for closed systems precepts that expand the theory of systems engineering are needed. The concept of closure is a critical concept in the formalism underlying closed systems precepts. In this paper, 
     we provide formal, systems- and information-theoretic definitions of closure to identify and distinguish different types of closed systems. Then, we assert a mathematical framework to evaluate the subjective formation of the boundaries and constraints of such systems. Finally, we argue that engineering an intelligent system can benefit from appropriate closed and open systems paradigms on multiple levels of abstraction of the system. In the main, this framework will provide the necessary fundamentals to aid in systems engineering of intelligent systems.
\end{abstract}

\begin{keywords}
Systems Engineering, SE4AI, Closed Systems, Open Systems, Systems Theory, Informational Closure, Functional Closure, Intelligent Systems
\end{keywords}


\maketitle

\section{Introduction}
 
There has long been a call for a theory of Systems Engineering (SE) within the SE community with the aim of establishing SE  as a standalone engineering field capable of addressing modern engineering problems\cite{hazelrigg2022toward}\cite{kasser2011unifying}. However, there is an existing gap in concrete formalism and distinction for some fundamental concepts within the field that has led to ambiguity in some SE practices\cite{salado2021systems}. While such formalism might not have been necessary in the past, the emergence of new kinds of complex systems such as Artificial Intelligence (AI)-enabled systems has challenged traditional SE practices\cite{llinas2021motivations,tolk2011towards,mcdermott2021artificial,llinas2021systems,shadab2021shifting,smith2017cognitive}.

In our previous work, we identified potential gaps in the current SE foundations to address the unique nature of AI-enabled systems \cite{shadab2022closed}. We argued that intelligence is a relational property that can be characterized and engineered as a relation between the system and its context with both learning and intelligence properties embodied in the context regardless of the nature of the relations between them\cite{shadab2022closed}. In this situation, intelligence is no longer relegated to a component or the physical boundary of the system. Therefore, we posited that owing to this high coupling between AI-enabled systems and their environment, utilizing the concept of closure in SE is a potential path forward to build general engineered intelligence\cite{shadab2022closed}. We proposed that closed SE practices could be employed to realize the closed notion of this relational property between intelligent systems and their context. We further examined the possibility of employing closed systems precepts in an engineering framework in our later paper\cite{cody2022core}, concluding a lack of concrete definitions and formalism in the theory and practice of SE presents a barrier to applying closed system precept in engineering applications. Currently, most of the theoretical foundations in both systems theory and the theory of SE are bounded to the open systems precepts (i.e., inputs-outputs relations) \cite{dag2000introduction}. Although the concept of closed systems is being utilized in limited applications in SE, there is little to no theoretical basis for these practices, making SE activities based on closed system precepts prone to interpretation and over-abstraction. As we have identified at least one domain that can benefit from closed systems precepts (AI-enabled systems), the need for clear definitions and formalism becomes increasingly important in the field of SE. This paper revisits the concept of closure in SE, aiming to formalize, define, and evaluate this concept as the first step towards employing closed systems precepts for intelligent systems. 

As mentioned earlier, closure has been vaguely applied in SE with limited underlying formal foundations \cite{hutchison2018framework,di2018closed}. Various types of closure have been introduced in systems theory literature, including functional closure, organizational closure, operational closure, and informational closure, among others\cite{zeleny1981autopoiesis,varela1974autopoiesis,bednarz1988autopoiesis,bertschinger2006information,bertschinger2008autonomy, mora2012closure}. As a starting point, closure can be understood as a property of a system that makes the system closed, and a closed system is defined as one that does not exchange energy, information, or matter through its boundaries\cite{bertalanffy1968bertalanffy}. (This concept will be revisited in detail later in the paper.) However, there is little to no formal framework to describe the relationships and differences between each type of closure, and many of the closure types lack formal systems-theoretic definitions that distinguish them from the other types of closure; in fact, on many occasions, these terms are used interchangeably, which can cause confusion in the application of each type of closure\cite{goodenough2002concept}\cite{di2018closed}. In this paper, we develop formal systems-theoretic definitions for two types of closure, functional and informational closure, in systems and compare them in terms of system's characteristics and the relations between systems and their environment. Among all types of closure, functional and informational closure were selected as we believe they were more relevant to the systems engineering practices. We utilize a mathematical definition of functional dependency, information theory, and the systems-theoretic foundations for open and closed systems to produce formalism for functional and informational closure. Then, we determine the conditions and constraints to meaningfully use each of the two types of closure in systems. Our aim is to elaborate on the relations between these types of closure to determine their applications at different levels of abstractions for systems. Throughout this paper, we will use the terms \textit{closure} and \textit{closedness} interchangeably.

\section{Background in Information Theory} 
Before delving into the formalism, we will provide a brief introduction to the concepts and mathematics of information theory.

Information theory establishes a relation between information and uncertainty, where information is inversely proportional to uncertainty. It enables the receiver of information to make more accurate predictions than chance\cite{logan2012information}\cite{adami2016information}. 
Information of a system depends on the observer's degree of freedom to measure the number of unknown states of the system. Entropy is a crucial concept in information theory which measures the uncertainties of a random variable\cite{adami2016information}. In information theory, the entropy of a random variable calculates the average level of information regarding the possible outcomes of that variable\cite{von2004information}. Entropy connects information with probability and uncertainty of a random variable. We can interpret this connection as follows: \textit{the lower the entropy of a system, the more information we possess about the possible future states of the system}. Entropy can be calculated as follows:

\begin{equation}
\label{eq:eq3}
   H(X) = - \sum_{x\in{X}} p(x)\log p(x)
\end{equation}

Where $X$ is a set of random variables, and $p(x)$ is the probability of occurrence of element $x$ in set $X$. Equation \ref{eq:eq3} represents information entropy, which was introduced by Shannon to compute the information transmitted from a source to a receiver through an information channel that the receiver can identify\cite{shannon2001mathematical}. 


Building up from Equation \ref{eq:eq3}, one can define conditional and joint entropy of two discrete variables, which can be derived as follows (a similar process can be applied to continuous random variables) \cite{cover1991entropy}:

\begin{equation}
\label{eq:joint-entrpy}
   H(Y,X) = - \sum_{x\in{X},y\in{Y}} p(x,y)\log p(x,y)
\end{equation}
\begin{equation}
\label{eq:cond-entrpy}
   H(Y|X) = - \sum_{x\in{X},y\in{Y}} p(x,y)\log p(x|y)
\end{equation}

Joint entropy indicates the amount of information needed to determine the value of two discrete variables. Conditional information depicts the amount of additional information needed to determine the value of a random variable given the knowledge of the value of the other random variable. If we expand Equations \ref{eq:joint-entrpy} and \ref{eq:cond-entrpy} using simple algebra and the chain rule, we can achieve the following relation between joint entropy and the conditional entropy: 

\begin{equation}
\label{eq:eq6}
\begin{split}
   H(Y|X) = H(Y,X) - H(X)\\
   H(X|Y) = H(Y,X) - H(Y)
\end{split}
\end{equation}

In probability theory, we have $p(x,y) = p(y,x)$. As a result, we can deduce from Equation \ref{eq:joint-entrpy} that the joint entropy of $X$ and $Y$ is the same as the joint entropy of $Y$ and $X$. Using $H(Y,X) = H(X,Y)$, and simple algebra from Equation \ref{eq:eq6}, we can derive the following relation:
\begin{equation}
\label{eq:eq7}
   H(Y|X) = H(X|Y) + H(Y) - H(X)
\end{equation}

\subsection{Mutual Information}
The concept of \textit{Mutual Information} is also relevant for this paper. Mutual information captures the dependency between two random variables $X$ and $Y$. It represents the amount of uncertainty that is common to both $X$ and $Y$. Therefore, by observing one random variable, the uncertainty that is mutual with the other random variable will be resolved. In other words, mutual information determines the amount of information one can get from random variable $X$ by observing the other random variable $Y$. This information is jointly distributed according to the joint probability of $X$ and $Y$.    
The formula for mutual information is given as follows\cite{cover1991entropy}:
\begin{equation}
\label{eq:mutual-info}
   I(X;Y) = \sum_{x\in{X},y\in{Y}}p(x,y)\log\frac{p(x,y)}{p(x)p(y)}
\end{equation}

Based on the definition, mutual information is always non-negative\footnote{This property can be proved mathematically using Jensen's inequality and relative entropy. For more information please check \cite{learned2013entropy}}\cite{learned2013entropy}. Using Equation \ref{eq:cond-entrpy} and simple algebra, Equation \ref{eq:mutual-info} can be written in terms of entropy as follows:
\begin{equation}
\label{eq:mutual-info-entropy}
\begin{split}
     I(X;Y) = H(X) - H(X|Y) \\
     = H(X) + H(Y)- H(X,Y)
\end{split}
\end{equation}

Conditional mutual information can be also defined when we have three random variables and have joint distribution $p(x,y,z)$. It is a measurement of how much uncertainty is shared between $X$, and $Y$ but not with $Z$. It can be defined as follows\cite{vu2014entropy}: 
\begin{equation}
\label{eq:cond-mutual-info}
\begin{split}
     I(X;Y|Z) = - \sum_{x,y,z}p(x,y,z)\log\frac{p(x,y|z)}{p(x|z)p(y|z)}
\end{split}
\end{equation}

Equation \ref{eq:cond-mutual-info} also can be written as follows:

\begin{equation}
\label{eq:cond-mutual-info-entropy}
\begin{split}
    I(X;Y|Z) = H(X|Z) - H(X|YZ)\\
    = H(XZ)+H(YZ)-H(XYZ)-H(Z)
\end{split}
\end{equation}

\section{Background on Closed vs Open Systems In Systems Theory}\label{background}
In this section, we provide a brief introduction to the foundations of closed and open systems in systems theory to establish the background required for the formalism of closed systems precepts.

In systems theory, open and closed systems precepts are foundational precepts. Early general systems theorists and biological systems theorists defined open systems as those that have external interactions, with a boundary between the internal and the external, allowing interactions across the boundary \cite{von2010general}. This definition captures richness in the exchange of matter, energy, and information between a system and its environment \cite{mele2010brief}. A system with no external interactions is referred to as a closed system. Using a modeling framework that studies the structure, behavior, and properties of the systems in terms of relationships\cite{bertalanffy1968bertalanffy}, a closed system description can become as a special form of an open system: one whose input and output sizes are assumed to equal zero \cite{wymore2018model}.

In biology and natural sciences, an open system is a system whose border is permeable by matter and energy, while a closed system is only permeable by energy \cite{kast1972general}. In control theory, closed systems are open systems where the input is composed of feedback to adjust the output \cite{nise2020control}. Generally, \emph{closedness} is primarily used to describe the nature of open systems' boundaries, as in biology and natural sciences, or the use of feedback to adjust interactions, as in control theory. In systems theory, as discussed, it is reflected by the absence of any input-output relations in systems. The definition of a closed system is limited to the abstract notions of the absence of inputs and outputs. However, some attempts have been made to describe open and closed systems using a set-representation of main relations on the components, behaviors, functions, inputs and outputs of the system\cite{wang2015towards}. An open system in this context can be described as a 7-tuple $S = (C, B, R^c, R^b, R^f, R^o, R^i)$, where $C$ is a finite set of components, $B$ is a finite set of behaviors, $R^c$ is a finite set of component relations, $R^b$ is a finite set of behavioral relations, and $R^f$ is a finite set of functional relations, where a functional relation is defined between components and behaviors within the system. $R^o$ and $R^i$ are sets of finite output and input relations between external systems and the system of interests, respectively. 
Consequently, a closed system is a special case of an open system that does not include $R^o, R^i$. Therefore, it is represented as 5-tuple $S = (C, B, R^c, R^b, R^f)$ \cite{wang2015towards}. In this paper, however, we aim to be consistent with a systems-theoretical framework for defining systems which is based on the relations on system's inputs-outputs. 


In summary, the general systems-theoretical definition of a closed system in systems theory literature refers to a system that has no inputs or outputs. This definition can be formally shown in Definition \ref{def0}:

\begin{definition}[\textbf{Systems-theoretical Closed System}]
A system that has no input set, $\mathcal{X}$, and output set, $\mathcal{Y}$, from/to its environment.
\[\mathcal{X}=\mathcal{Y}=\emptyset\]
\label{def0}
\end{definition}

\noindent From an objective perspective, it is arguable that only the entire universe might satisfy the condition in Definition \ref{def0}. In fact, closed systems were deemed by early general systems theorists as nonexistent \cite{von2010general}. However, closedness can still be used as a relaxation to support engineering work\cite{estefan2007survey}. Specifically, a modeler could choose to ignore the existence of inputs and outputs, thereby assuming a closed system. This approximation has proven to be useful in several engineering applications, such as thermodynamics. In this paper, we posit that functional closure and informational closure are two potential paths to enable such relaxation of Definition \ref{def0} by formulating a direction on how to ignore inputs and outputs set to model a closed system. These paths provide a framework to define closed systems in specific contexts, enabling modelers and systems engineers to make simplifying assumptions and develop accurate models.

With this background on systems theory, we will provide systems-theoretic definitions of the terminologies that we use for our formalism in Section \ref{TERM}.

\section{Elaboration of Terminologies:} \label{TERM}

Noting that system boundaries are subjective and fluid, with no restriction as to the relationships between the elements that may fit within the boundaries \cite{salado2021assessment}, we begin by defining a system of interest denoted by ${S^0}$. $S^0$ is a system that will be engineered for a specific purpose. With respect to an ${S^0}$, we define the following systems:
\begin{itemize}
    \item Environment, denoted by $E$: It is a non-empty system that consists of everything outside of $S^0$.
    \item Context system, denoted by $S^C$: It is a system that consists of both ${S^0}$ and a non-empty part of $E$, which we call Inner Environment and denote by $E^I$. So, $S^C = {S^0} \cup {E^I}$.
    \item Inner Environment, denoted by $E^I$: As per the previous definition, it is a non-empty system that consists of the complement of ${S^0}$ with respect to $S^C$.
    \item Outer environment, denoted by $E^O$: It is a system that consists of the complement of $S^C$ with respect to $E$.
    \item Universe, denoted by $U$: It is a non-empty system that consists of the entire environment $E$, and the system of interest $S^0$.
\end{itemize}

The terminology used throughout this paper is depicted in Figure \ref{fig:mesh1}, where the boundary of $S^C$ is represented by a blue circle and the curved orange arrows depict the coupling between ${S^0}$ and $E^I$. In our definition, we partition the environment into two sets, ${E} = E^O \cup {E^I}$ wher $E^O \cap {E^I} = \emptyset$.

\begin{figure}[h]
    \centering
    \includegraphics[scale=0.6]{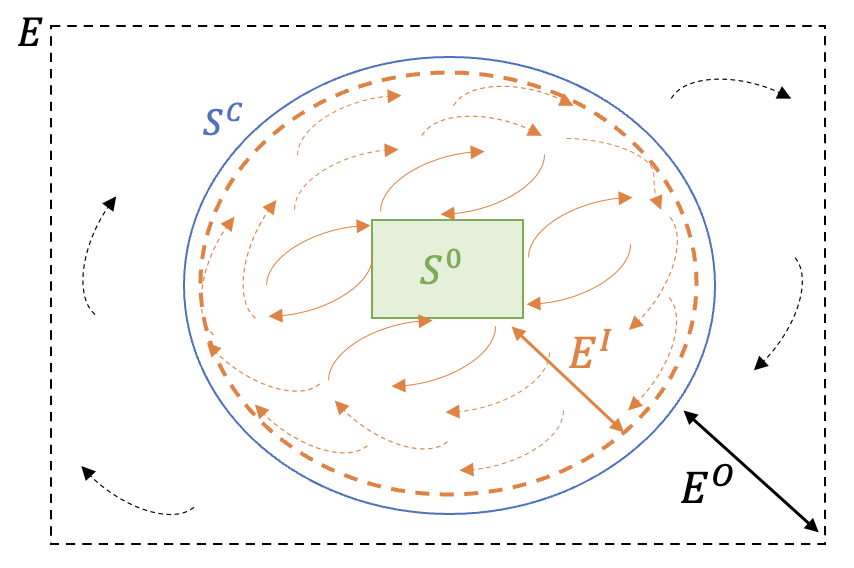}
    \caption{Interactions and relations between systems in a closed system setting are depicted. $E^O$ is the environment outside of the Context system. The context system, $S^C$, includes the system of interest, ${S^0}$, and a portion of the overall environment, the inner environment, $E^I$}
    \label{fig:mesh1}
\end{figure}

To maintain consistency with systems theory, we will use set-theoretic formalism to define system introduced in this paper. A set representation of the systems is based on the relation of the set of systems objects given the objects indices. Mathematically, we can represent this set as follows: $S \subseteq \times \{V_i |i \in I\}$ where $V_i$ denotes the $i^{th}$ element of the set. In this context, let $S$ be an input-output system $S \subseteq \times \{\mathcal{X},\mathcal{Y}\}$ and $\Bar{S}$ the component sets of set $S$, i.e., $\{\mathcal{X}, \mathcal{Y}\}$. Consequently, we can define system of interest, Context system, and Universe, $S^0, S^C, U$, as follows:

\begin{equation}
\begin{split}
    S^0: S^0 \subseteq \times \{\mathcal{X}^0, \mathcal{Y}^0\} \\
    U: \mathcal{X}^u = \mathcal{Y}^u = \emptyset\\
    S^C: S^C \subseteq \times \{\mathcal{X}, \mathcal{Y}\} \\
    \label{three-systems-eq}
\end{split}
\end{equation}

Having defined $S^0, S^C, U$, we can formulate the environment with respect to these three defined systems. As it is shown in Figure \ref{fig:mesh1}, the entire environment consists of everything outside $S^0$, therefore, we can define $E$ as follows:

\begin{equation}
\begin{split}
   E: E \subseteq \times \{\mathcal{X}^E, \mathcal{Y}^E\}\\
    \text{Where, }\Bar{E} = \Bar{U} \setminus \Bar{S^0} 
    \label{environment-eq}
\end{split}
\end{equation}

Now that we formulated the entire environment, $E$, in Equation \ref{environment-eq}, Inner environment, $E^I$, and Outer environment, $E^O$, will be presented as a subset of $E$:

\begin{equation}
\begin{split}
   E^I: E^I \subseteq \times \{\Bar{E} \cap \Bar{S}^C\} \\
   E^0: E^0 \subseteq \times \{\Bar{E} \setminus \Bar{S}^C\}
    \label{inner-outer-envo-eq}
\end{split}
\end{equation}


Having defined different systems' boundaries, we also provide mathematical definitions for the system's terminologies that we will be using in the formalism process of closed systems. These terminologies are defined in a systems-theoretical framework. The list below contains these terminologies and their mathematical representation/definitions: 

\begin{itemize}
    \item \textbf{Functional System}: $S$ is functional system if $S$ executes a function, i.e, with every input $x \in \mathcal{X}$, there is associated a single output: $S: \mathcal{X} \rightarrow \mathcal{Y}$

     In line with abstract systems theory \cite{mesarovic2006complex}, we assume all functional systems to be surjective, unless otherwise specified. Also, note that, by definition, every open system will be a functional system.
     \item \textbf{Functional Dependency}: Functional dependency is defined based on the relationship between a dependent set $Y$ and an independent set $X$ through a function $f(X)$\cite{laurini1992fundamentals}. We can express this relationship as $X \rightarrow Y$, meaning that one set (X) can accurately determine the value of the other set (Y). In a surjective function, there is always one dependent set or variable and one independent set or variable, making functional dependency a sufficient condition for a function to be surjective.  Therefore, functional dependency can be also defined as: 

\begin{equation}
    \forall y \in Y, \text{ }\exists x \in X \quad \text{s.t.} \quad S(x) = y
    \label{func-depen}
\end{equation}
    \item \textbf{Set of System's Functions}: $F$ is a set of system's function if, $F$ is defined as $F \subseteq \times \{F_i|i\in I\} \text{ such that } F:\mathcal{X} \rightarrow \mathcal{Y}$, i.e., $F$ is a mapping between system's inputs and outputs\cite{mesarovic1975general}.
    \item \textbf{System's Behavior}: System's behavior can be described in reference to the system's response to any stimulus from its environment, $x \in \mathcal{X}$ \cite{mesarovic1972conceptual}. For any stimulus $x \in \mathcal{X}$, and its associated decision problem; $\Delta(x)$, there is a solution $d(x) \in D$, where $D$ is the decision set. Therefore, system's behavior is defined as a mapping of system's decision set $D$ to the system's outputs $\mathcal{Y}$. This mapping can be shown as: $Q: D \rightarrow \mathcal{Y}.$
    \item \textbf{System's States}: State of the system represents the history of the system's behavior. The state at any point in time can be represented as an equivalence class generated by the equivalence relation defined on the system's past behaviors\cite{mesarovic1975general}.
\end{itemize}

To investigate how and under what conditions the context system may be conceived or relaxed as a closed system, we explore two potential paths: functional closure and informational closure. We examine the requirements and implications of each approach and discuss their applicability to real-world SE problems. Ultimately, our goal is to provide insights and guidance for system engineers seeking to model closed systems and understand the implications of such models.
 
\section{Functional Closure} \label{functional}
Given the terminologies introduced and formalized in Section \ref{TERM}, this paper now explores the concept of functional closure. We provide a formal definition for a functionally closed system. With the functional closure concept, instead of having a systems-theoretical closed systems, we have a relaxation of a closed system that can be considered closed only from the functional point of view.


Based on the systems-theoretic definition of functional system, and the condition of functional dependency, a system can be considered to be functionally closed with respect to its environment when the system's outputs are independent of the inputs received from the environment and the behavior of the environment is independent of the inputs it receives from the system. Mathematically, this can be represented as $\times \{\mathcal{X}, \mathcal{Y}\} = \emptyset$, indicating that there is no mapping between the system's inputs and outputs. This condition is consistent with the definition of a systems-theoretic closed system, which also requires $\mathcal{X} = \mathcal{Y} = \emptyset$, indicating no mapping between inputs and outputs. Therefore, it can be concluded that a systems-theoretic closed system, which has an empty set of inputs and outputs, is also a functionally closed system, as it implies an empty set of mappings between inputs and outputs, indicating functional independence of the system from its environment. 

However, this definition may not have any practical or formal systems-theoretical implications. In systems theory, systems are defined as relations on their inputs and outputs. Therefore, the absence of any mappings between inputs and outputs in functionally closed systems prevents us from defining such systems in a systems-theoretical formulation, as expressing relations of an empty set has no meaning. To overcome this complication, we can define a condition where a system is conceived as functionally closed if its set of functions can be defined by a minimal set of inputs and outputs. By minimal set, we suggest that any changes in the inputs set will not change the function of the system, i.e., the system's outputs set. This allows us to define a functionally closed system in a practical and meaningful way, as it captures the concept of functional independence from inputs while avoiding the issues associated with expressing relations on an empty set. We formally demonstrate functional closure using the minimal set constraint as follows:

\begin{equation}
\begin{split}
& \text{Given a functional system, $S$}:X \to Y \\
& \text{where}\quad \forall y \in Y, \text{  } \exists x \in X, \quad \text{s.t.} \quad S(x) = y \\ 
& \text{Given a functional system, $S'$}:X' \to Y \text{ where } X \subset X' \\
& \text{If}\quad \forall y \in Y\text{, } \not \exists x' \in X', \quad \text{s.t.} \quad S'(x') = y \text{ } \wedge \text{ } x' \not \in X \\
& \Rightarrow S' \text{ is functionally the same as S} \\
& \Rightarrow S \text{ is functionally independent from additional inputs $x'$} \\
& \Rightarrow S \text{ is functionally closed from } S' \\
\label{lemma0}
\end{split}
\end{equation}

Based on Functional Closure formulation in Equation \ref{lemma0}, we deduce that the outputs set of system $S$ is not dependant on any additional inputs beyond the minimal inputs set from System $S'$. Therefore, we can outline the conclusion from Equation \ref{lemma0} as follows: 

$S$ is said to be functionally closed from $S'$ if and only if:
\begin{itemize}
    \item There exists a minimal set $M$ of inputs and outputs, where $M \subseteq \times \{\mathcal{X}, \mathcal{Y}\}$, such that $S$ is functionally dependent only on $M$, i.e., $S \subseteq \times \{\mathcal{X}_M, \mathcal{Y}_M\}$
    \item There are no additional inputs beyond $M$ that can influence the behavior of the system, $S$.
    \item The state of $S'$ is not affected by the outputs of $S$.
\end{itemize}

When these conditions are present, functional independence of the system from inputs set and outputs set can be relaxed as $\mathcal{X}=\mathcal{Y}=\emptyset$. This relaxation is due to the independence of the system's functions from changes in inputs set, $\mathcal{X}$, and hence, existence of a constant set of outputs, $\mathcal{Y}$. A functionally closed system can be represented via the set of system's functions (which already captures the minimal set of input-output relation). This representation allows us to ignore $\mathcal{X}$ and $\mathcal{Y}$ in the representation of such system. Therefore, a functionally closed system can be a relaxation of the systems-theoretic closed system (Definition \ref{def0}). To prove that functional closure is a relaxation of the systems-theoretic closure, we will have:

\begin{equation}
\begin{split}
& F \subseteq \times \{F_i |i \in I\} \quad \text{s.t} \quad F:  \mathcal{X}_m \rightarrow \mathcal{Y}_m \\ 
& \text{Where:} \quad \mathcal{X}_m \subseteq M \quad \mathcal{Y}_m \subseteq M \\
& \text{If:} \quad \exists X_{E}, \quad \exists Y_E, \quad \text{s.t} \quad \mathcal{X}_m \subseteq X_{E}, \quad \mathcal{Y}_m \subseteq Y_{E} :\\
& \text{ We have: } F: X_{E} \rightarrow \mathcal{Y}_m \\
& \text{If: S is defined as } S \subseteq \times \{F_i |i \in I\} \\ 
& \text{Let: } \mathcal{X} = X_{E}\setminus M \text{ and } \mathcal{Y} = Y_{E}\setminus M \\
& \text{We have: } S \not \subset \times \{\mathcal{X}, \mathcal{Y}\} \\ 
\end{split}
\end{equation}

Where F is the set of system's functions. When the conditions stated above are present, functional independence of the system from inputs set and outputs set can be relaxed as $\mathcal{X} =  \mathcal{Y} = \emptyset$, which is the same condition for a systems-theoretic closed system. Therefore, a functionally closed system can be considered a relaxation of the systems-theoretic closed system because it is a less restrictive condition that still allows for some interaction with the environment.

Extending Equation \ref{lemma0} into the paper's terminologies, we posit that by having functional independence between the context system and its environment, the context system achieves functional closure. It means that (1) there is no inputs set from the outer environment, $E^O$, that influences the behaviors/functions of the context system, $S^C$, and therefore any of its internal functions, and (2) the state of the outer environment, $E^O$, is not influenced by the outputs of the context system, $S^C$.  Given the interpretation of functional closure for a system in Equation \ref{lemma0}, a functionally closed context system; $S^C$; can be formally defined as follows:

\begin{definition}[\textbf{Functionally Closed Context System}] 
A functional context system, $S^C$, is functionally closed from its outer environment, $E^O$, if and only if, \\
1) There exists a minimal set of inputs and outputs, , $M$, such that $S^C$ is functionally dependent on $M$. 
This condition can be shown as: $S^C \subseteq \times \{\mathcal{X}_M,\mathcal{Y}_M\}$, and \\
2) There are no additional inputs from $E^O$ beyond $M$ that can influence the behavior of $S^C$. and \\
3) There are no additional outputs from $S^C$ beyond $M$ that can affect the behavior of $E^O$. \\
Mathematically, the second and third conditions can be shown as follows:

\[\text{Given:}\quad S^C: \mathcal{X}_M \rightarrow \mathcal{Y}_M \text{ } \And \text{ } E^O: \mathcal{X^O} \rightarrow \mathcal{Y^O}\] 
\[\text{Where:}\quad y \in \mathcal{Y}_M \text{ } \And x \in \mathcal{X}_M\] 
\[\text{From Eq \ref{three-systems-eq}, and Eq \ref{inner-outer-envo-eq}, we know: }  \mathcal{Y^O} \rightarrow \mathcal{Y}_M \wedge \mathcal{X}_M \subseteq \mathcal{Y^O} \]
\[\text{If:}\quad  x': x' \in \mathcal{Y^O} \wedge x' \not \in \mathcal{X}_M\]
\[\forall y \in \mathcal{Y}_M,\text{ } \not \exists x' \in Y^O, \quad \text{s.t.} \quad S^C(x') = y \quad\] \\

\label{functionalClosure}
\end{definition}

This definition of a functionally closed system in Definition \ref{functionalClosure} shows that it is a relaxation of the systems-theoretic definition of a closed system. It allows for the context system to have interactions with the environment, but these interactions must not affect the behavior of the context system or its outputs. 

It should be noted that functional closure derived from Definition \ref{functionalClosure} is fundamentally different from a closed control system where inputs and outputs are coupled together through feedback loops. The thorough comparison between these two types of systems is discussed in our previous work \cite{shadab2022closed}. 


\subsection{Functional Closure Constraint For Engineering Purposes}
Figure \ref{image002} shows the translation of system of interest's relations with its environment through functional closure. In the top diagram, we have $Item 3 = f(Item 2, Item 5)$. In the bottom diagram, we have $Item 3 = f(Item 2)$. Assume that system of interest's behavior is independent of the outer environment (or that its effect is negligible). Then, we can state: $Item 3 = f(Item 2, Item 5) = f(Item 2)$. Functional closure allows us to remove the arrow $Item 5$. The same story can be constructed from the perspective of $Item 6$, assuming that the SOI causes negligible impact on the outer environment.

From the perspective of system of interest, the context is functionally closed if also for the inner environment, we can ignore the impact of $Item 1$ in $Item 2 = f(Item 3, Item 1)$. Therefore, we have: $Item 2 = f(Item 3, Item 1) = f(Item 3)$. The same story would be constructed for $Item 4$.

\begin{figure}[h]
    \centering
    \includegraphics[scale=0.5]{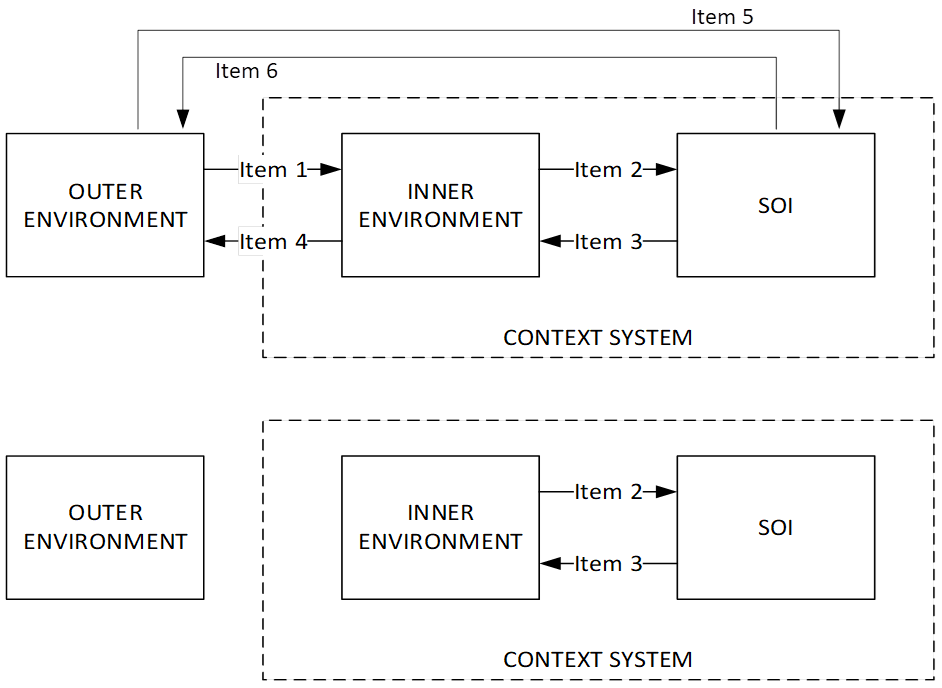}
    \caption{Top Diagram captures all inputs and outputs between $S^0, E^I$ and $E^O$. Bottom diagram shows functional closure framing of the top diagram.}
    \label{image002}
\end{figure}

Therefore, to have functional closure between $S^C$ and $E^O$, we need to attest that there is functional independence between these two systems. The absence of functional dependencies between $S^C$ and $E^O$ implies that both $S^0$, $E^I$ have functional independence from $E^O$. Since $S^0$, and $E^I$ are functionally dependent on each other (within closed system's boundary), if $E^I$ is independent from $E^O$, $S^0$ will also be independent from $E^O$ and vice versa. 




In a scenario where $E^I$ and $E^O$ can be relaxed as functionally independent, $S^C$ is functionally closed from $E^O$. 
Relaxation of functional closure may suffice in some engineering purposes. Some SE practices such as mission engineering are based on the assumption of functional independence of $E^I$ and $E^O$ and functional closure of $S^C$. However, an absence of mapping between inputs and outputs for the two partitions of the environment is not a realistic condition in many of the engineering applications. 


We demonstrated that for $S^C$ to be engineered as a functionally closed system, it either requires to encompass all of ${E}$, which is the universal system or to deal with an environment that can be divided into two functionally independent parts which is not a valid assumption for the majority of the complex environments. In functionally closed systems, we ought to create a system that is built upon its minimal set of inputs and outputs and will not change or functionally evolve over time. Therefore, a functionally closed system can be suitable for some categories of SE practices and cumbersome for the rest. The modeling benefits of this relaxation would be realized if the system is characterized at an accurate level of abstraction. We, next, examine informational closure as another path to have a relaxation of closure for engineering systems.

\section{Information Closure}
In this section, similar to Section \ref{functional}, we will elicit formalism of informational closure definitions. Then we will examine if compared to functional closure, informational closure is a better solution for a relaxation of the systems-theoretical definition of a closed system.

Informationally closed system can be achieved when there is no \textit{new} information exchanged between the context system $S^C$ and its environment $E^O$\cite{bertschinger2006information}. This definition is different from functional closure where any type of information (new or expected) cannot cross the closed system's boundary. The interpretation of this definition is that mutual information between the closed system at state $n$, $S^C_n$, and its environment, $E^O_n$, needs to be enough to ignore any new information transmission between these two systems. This closure implies that the joint information between the closed system at state $n+1$, $S^C_{n+1}$, and the environment, $E^O_n$, given the information from the closed system at state $n$, $S^C_n$, should also be zero \cite{bertschinger2006information}. It indicates that the future state of an informationally closed system should not be dependent on the conditional information of its environment given the information of the present state of the system. Consequently, the current environment does not contain any information regarding the future states of the system that has not already been present in the current state of the system; this notion is an indication of informational closure\cite{bertschinger2008autonomy}. 

According to Definition \ref{functionalClosure}, a functionally closed system is not affected by any changes in its inputs set. In contrast, in an informationally closed system, there could be changes in the inputs and outputs sets to and from the closed system, $S^C$, that can be interpreted in the form of mutual information between the closed system; $S^C$; and its environment; $E^O$. An informationally closed system considers a boundary or scope that distinguishes the external environment, $E^O$, from the internal system, $S^C$, however, in contrast to functionally closed systems, it does not necessitate that the mapping between inputs and outputs are bounded to a minimal set, but rather that the changes in the inputs set; $\mathcal{X}$; will influence the set of outputs $\mathcal{Y}$ by changing/updating the content of mutual information between the current states of the closed system and outer environment. In functionally closed systems, on the other hand, we don't consider any influence on input-output sets crossing between the closed system and environment beyond their minimal sets. To translate this property of informationally closed systems, the systems-theoretic framework that is based on the mapping relations between $\mathcal{X}$ and $\mathcal{Y}$ should be extended such that it can take into account the dynamic nature of information transmission at each state in the closed system. In contrast to functionally closed system that closure is a static property of the system, informationally closed systems are defined at system's states. Closure is a transitional property that is bounded by the previous states of the system and outer environment. Therefore, to define this type of closed system, we utilize information theory.  

As described earlier in this paper, we consider a system \textit{informationally closed} when there is no flow of new information between the environment and the system. For informationally closed systems, from the definition that we provided earlier\footnote{Informational closure can be achieved when the flow of new information between the closed system, $S^C_n$, and its environment, $E^O_n$ sets to zero} , we have  $I(S^C_{n+1}; E^O_{n}|S^C_{n}) \rightarrow 0$ \cite{bertschinger2008autonomy}. We can frame this definition as follows:\\

\begin{definition}[\textbf{Interpretation of An Informationally Closed Systems Using Information}]
A Context System that transitions through states $1, 2, …,n, n+1$; is informationally closed at state $n$ if there is no joint information between $S^C_{n+1}$ and $E^O_n|S^C_n$.
 
\label{info-info-def}
\[I(S^C_{n+1};E^O_{n}|S^C_{n}) = 0\]

\end{definition}

\newtheorem{prop}{Proposition}
\begin{prop}
If $S^C$ is informationally closed, joint information of $S^C_{n+1}, E^O_{n}, S^C_{n}$, equals to joint information between $S^C$ at state $n$ and state $n+1$:

\[I(S^C_{n+1};E^O_{n},S^C_{n}) = I(S^C_{n+1};S^C_{n})\]

\end{prop}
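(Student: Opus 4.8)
The plan is to obtain the identity directly from the chain rule for mutual information, which in turn follows from the entropy decompositions already recorded in Equations \ref{eq:mutual-info-entropy} and \ref{eq:cond-mutual-info-entropy}. First I would expand the joint mutual information using Equation \ref{eq:mutual-info-entropy}, taking its second argument to be the composite pair $(E^O_n, S^C_n)$, which gives $I(S^C_{n+1}; E^O_n, S^C_n) = H(S^C_{n+1}) - H(S^C_{n+1} | E^O_n, S^C_n)$.

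Next I would add and subtract $H(S^C_{n+1} | S^C_n)$ and regroup into two brackets:
\[
I(S^C_{n+1}; E^O_n, S^C_n) = \big[H(S^C_{n+1}) - H(S^C_{n+1} | S^C_n)\big] + \big[H(S^C_{n+1} | S^C_n) - H(S^C_{n+1} | E^O_n, S^C_n)\big].
\]
By Equation \ref{eq:mutual-info-entropy} the first bracket is exactly $I(S^C_{n+1}; S^C_n)$, and by Equation \ref{eq:cond-mutual-info-entropy} (with $X = S^C_{n+1}$, $Y = E^O_n$, $Z = S^C_n$) the second bracket is exactly $I(S^C_{n+1}; E^O_n | S^C_n)$. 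This establishes the chain rule $I(S^C_{n+1}; E^O_n, S^C_n) = I(S^C_{n+1}; S^C_n) + I(S^C_{n+1}; E^O_n | S^C_n)$.

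Finally I would invoke the hypothesis that $S^C$ is informationally closed, which by Definition \ref{info-info-def} is precisely the statement $I(S^C_{n+1}; E^O_n | S^C_n) = 0$. Substituting this into the chain-rule identity cancels the last term and leaves $I(S^C_{n+1}; E^O_n, S^C_n) = I(S^C_{n+1}; S^C_n)$, which is the claim.

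I do not expect a genuine obstacle here; the only points requiring care are bookkeeping ones: applying the quoted two-variable entropy identities correctly when one argument is the composite $(E^O_n, S^C_n)$, and using the closure hypothesis in exactly the conditional form given by Definition \ref{info-info-def} rather than a variant such as $I(S^C_{n+1}; E^O_n) = 0$. It is also worth remarking, using the non-negativity of conditional mutual information (the property footnoted earlier in the paper), that the resulting identity can be read as saying the outer environment at state $n$ carries no information about the next state beyond what the current state already carries — which is the intended interpretation of the proposition.
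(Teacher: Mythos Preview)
Your proposal is correct and follows essentially the same route as the paper: expand $I(S^C_{n+1};E^O_n,S^C_n)$ via Equation~\ref{eq:mutual-info-entropy}, obtain the chain-rule decomposition $I(S^C_{n+1};S^C_n)+I(S^C_{n+1};E^O_n\mid S^C_n)$ using Equations~\ref{eq:mutual-info-entropy} and~\ref{eq:cond-mutual-info-entropy}, and then zero out the conditional term by Definition~\ref{info-info-def}. Your explicit add-and-subtract of $H(S^C_{n+1}\mid S^C_n)$ just makes visible the step the paper compresses into ``substituting $X,Y,Z$''.
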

\begin{proof}

 \[I(X;Y)=H(X)-H(X|Y) \Longrightarrow\] 
 \[I(S^C_{n+1};E^O_{n},S^C_{n}) = H(S^C_{n+1})-H(S^C_{n+1}|S^C_{n},E^O_{n})\]

 Thus; from Equation \ref{eq:mutual-info-entropy} and Equation \ref{eq:cond-mutual-info-entropy}, and substituting X, Y, Z with $S^C_{n+1}, E^O_{n}, S^C_{n}$, we have: 
 
\[I(S^C_{n+1};E^O_{n},S^C_{n})=\] 
\[I(S^C_{n+1};,S^C_{n})+I(S^C_{n+1};E^O_{n}|S^C_{n}) \rightarrow\]
\[I(S^C_{n+1};E^O_{n},S^C_{n})=I(S^C_{n+1};,S^C_{n})\]

\end{proof}

\subsection{Functional vs Informational Closure}
To rigorously compare informational closure with functional closure, we will provide an informational interpretation of functional closure. Interpreting functional closure using the concept of joint information follows the set-theoretic formalism of having the mapping between the environment and the closed system, $\times \{\mathcal{X}, \mathcal{Y}\}$, as an empty set beyond their minimal set. The absence of any mappings between outputs set and changes in inputs set indicates that the closed system was built upon the minimal information from the environment. No information from the environment would change the output of the system. If the information from outer environment enters the system at state $n$ and the output of the system is produced at the next state of the system, $n+1$, functional closure can be relaxed as if there would be no information transition between the next state of the system and the current state of outer environment. This absence of information transition from the environment to the closed systems reflects in the next state of the system such that there would be no mutual information between the next state of the system and the current state of the environment. This reflection suggests that all the information in the next state of the system will be provided by the current state of the system. Using this information-theoretic framing, functional closure for the context system; $S^C$; can be defined as:\\

\begin{definition}[\textbf{Interpretation of A Functionally Closed System Using Information}]
A Context System that transitions through states $1, 2, …,n, n+1$; is functionally closed if there is no mutual information between $E^O_n$ and $S^C_{n+1}$:
\[I(S^C_{n+1};E^O_{n}) = 0\]

\label{info-func-def}
\end{definition}

\begin{prop}
If $S^C$ is functionally closed, all the information existing in $S^C_{n+1}$ comes from $S^C_n$. Therefore, we have:

\[I(S^C_{n+1};S^C_n) = H(S^C_{n+1})\]

\end{prop}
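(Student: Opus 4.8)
The plan is to reduce the asserted identity to the single statement that the next state carries no entropy beyond what is already contained in the present state, and then to extract that statement from functional closure. By the entropy form of mutual information (Equation~\ref{eq:mutual-info-entropy}), $I(S^C_{n+1};S^C_n) = H(S^C_{n+1}) - H(S^C_{n+1}\mid S^C_n)$, so the proposition is equivalent to $H(S^C_{n+1}\mid S^C_n) = 0$. Everything therefore hinges on showing that, once $S^C_n$ is known, $S^C_{n+1}$ is fully determined.

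To obtain $H(S^C_{n+1}\mid S^C_n)=0$ I would bring in the present outer environment $E^O_n$ as the only other quantity that can act on the transition $n\to n+1$ — this is exactly the modeling premise behind the $S^C$/$E^O$ partition — so that $S^C_{n+1}$ is a function of the pair $(S^C_n,E^O_n)$ and hence $H(S^C_{n+1}\mid S^C_n,E^O_n)=0$. Then $H(S^C_{n+1}\mid S^C_n) = H(S^C_{n+1}\mid S^C_n) - H(S^C_{n+1}\mid S^C_n,E^O_n) = I(S^C_{n+1};E^O_n\mid S^C_n)$, so it suffices to kill this conditional mutual information. Functional closure, in the information-theoretic form of Definition~\ref{info-func-def}, supplies the unconditional version $I(S^C_{n+1};E^O_n)=0$; feeding this into the chain rule $I(S^C_{n+1};S^C_n,E^O_n) = I(S^C_{n+1};E^O_n) + I(S^C_{n+1};S^C_n\mid E^O_n) = I(S^C_{n+1};S^C_n) + I(S^C_{n+1};E^O_n\mid S^C_n)$, together with $H(S^C_{n+1}) = I(S^C_{n+1};S^C_n,E^O_n)$ from the determinism premise, pins $H(S^C_{n+1})$ to $I(S^C_{n+1};S^C_n)$ once the residual term $I(S^C_{n+1};E^O_n\mid S^C_n)$ is argued away.

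The hard part is precisely that residual term: conditioning can manufacture dependence, so $I(S^C_{n+1};E^O_n)=0$ does not by itself force $I(S^C_{n+1};E^O_n\mid S^C_n)=0$ (a parity-type coupling $S^C_{n+1}=S^C_n\oplus E^O_n$ with $S^C_n$ and $E^O_n$ independent is a counterexample). I would close this gap by invoking the stronger reading of functional closure already stated in the paragraph preceding Definition~\ref{info-func-def}, namely that under functional closure \emph{all} of the information present in $S^C_{n+1}$ is provided by $S^C_n$ — equivalently, no synergistic information jointly carried by $(S^C_n,E^O_n)$ can surface in the next state — which is just $H(S^C_{n+1}\mid S^C_n)=0$ read directly off the definition, making the proposition immediate via the reduction in the first paragraph. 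The alternative, if one wants the conclusion purely from Definition~\ref{info-func-def}, is to add the hypothesis that the present state and present outer environment are independent, $I(S^C_n;E^O_n)=0$, and record explicitly which Markov structure among $S^C_n$, $S^C_{n+1}$, $E^O_n$ is being assumed so that the conditional term provably vanishes; I would flag this as the one place where the argument needs an assumption beyond the bare definitions.
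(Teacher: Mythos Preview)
Your reduction in the first paragraph, together with the eventual fallback to the textual reading that ``all of the information in $S^C_{n+1}$ is provided by $S^C_n$'' directly yields $H(S^C_{n+1}\mid S^C_n)=0$, \emph{is} the paper's entire proof: it writes $I(S^C_{n+1};S^C_n)=H(S^C_{n+1})-H(S^C_{n+1}\mid S^C_n)$ from Equation~\ref{eq:mutual-info-entropy}, asserts $H(S^C_{n+1}\mid S^C_n)=0$ because ``the entire information of $S^C_{n+1}$ comes from $S^C_n$'', and concludes. So your core argument matches the paper's exactly.

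Your middle section, which tries to \emph{derive} $H(S^C_{n+1}\mid S^C_n)=0$ from Definition~\ref{info-func-def} alone via a determinism premise and the chain rule, goes well beyond what the paper attempts; the paper simply takes the clause ``all the information existing in $S^C_{n+1}$ comes from $S^C_n$'' as the operative hypothesis of the proposition and does not try to recover it from $I(S^C_{n+1};E^O_n)=0$. Your synergy observation (the parity counterexample) is correct and is not something the paper engages with: it shows that the bare equality in Definition~\ref{info-func-def} is strictly weaker than what the proposition needs, so the proposition is tacitly using the stronger reading of functional closure that you isolate. In that sense your analysis is more careful than the paper's, but for matching purposes the short route in your first paragraph suffices.
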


\begin{proof}

From Equation \ref{eq:mutual-info-entropy}, the joint information of $S^C_{n+1}$ and $S^C_n$ in a functionally closed system would be: 

\[I(S^C_{n+1};S^C_n)=H(S^C_{n+1})- H(S^C_{n+1}|S^C_n)\]

As the entire information of $S^C_{n+1}$ comes from $S^C_n$, $H(S^C_{n+1}|S^C_n)$ will be zero. 

 \[H(S^C_{n+1}|S^C_n) = 0\]
 
 Thus;
 \[I(S^C_{n+1};S^C_n)=H(S^C_{n+1})\]
 
\end{proof}

In a functionally closed system, the inputs set from $E^O$ at state $n$ would not affect the outputs set of $S^C$ at state $n+1$. Having no effects of inputs change on the outputs set, we argue that there is also no joint information between the system at state $n$, $S^C_{n}$, and environment at state $n$, $E^O_n$. Definition \ref{info-func-def} could therefore be extended such that it includes the fact that there will be no input from the environment, $E^O_n$,  with new information to the system at state $n$ and no output from the system with new information to the environment at state $n$. This relation emphasizes on the static nature of functional closure.

Informational closure, on the other hand, adds a dynamic nature to functional closure where the system can be considered closed relative to its set of states and the outer environment set of states. Informational closure implies that an informationally closed system is not derived from the constraints on inputs set. It is rather derived from the ability to predict or expect such an inputs set from its environment, $E^O_n$. This implication again is related to the relationship between the concepts of information and uncertainty in systems which means the more the mutual information, the less the uncertainty related to inputs set. Intelligent systems in particular can benefit from utilizing such closure as these systems are expected to have the capability of prediction of inputs from the environment. In line with this implication, later in the paper, we will argue how informational closure can be employed as an engineering constraint to build intelligent systems.

In informational closure, there is causal dependency between the environment; $E^O_n$; and the closed system; $S^C_n$. Another major distinction between the informationally and functionally closed systems roots in the difference in assumptions about the causal (or functional) dependencies between the system and its environment. In Definition \ref{functionalClosure}, it is shown that there is no functional (causal) dependencies between the functionally closed system and its environment beyond its minimal set. In contrast, there are inputs and outputs to and from an informationally closed system which can be defined as mutual information. This input-output relation; mutual information; can be built upon various types of causal relationships between the two systems. Mutual information can have causality originated from either the closed system or its environment. The causal dependency analysis for this type of closure answers the question of ``where the information originates from". However, in this paper, we investigate \textit{``the very existence of mutual information transmitting through the closed system's boundary"}. Thus, although, understanding causal relations is necessary for engineering applications, the causal dependency of this mutual information is not the focus of this paper, and it could be considered as a potential future work.

\subsection{Informational Closure constraint for Engineering Purposes}\label{CSE}
Figure \ref{image003} shows the translation of system of interest's relations with its environment through informational closure. In the top diagram, we have the same systems as in Figure \ref{image002}. In the bottom diagram, we have an additional $Item 7$ which is the mutual information between the informationally closed system and its environment $E^O$. This mutual information is a subset of all the inputs-outputs to/from $S^C$ and $E^O$. The impact of the rest of the inputs-outputs can be ignored in $S^C$.

\begin{figure}[h]
    \centering
    \includegraphics[scale=0.44]{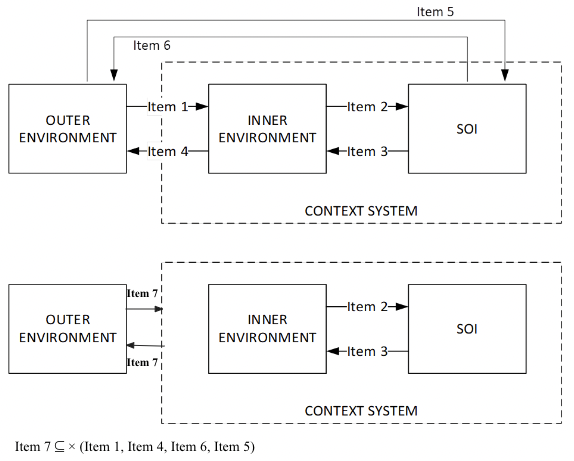}
    \caption{Top Diagram captures all inputs and outputs between $S^0, E^I$ and $E^O$. Bottom diagram shows informational closure framing of the top diagram.}
    \label{image003}
\end{figure}

So far, we provided a foundation of why mutual information between the informationally closed system and its environment should stay high enough to maintain the condition of informational closure. As a result, the next step is to define ``high enough" level of mutual information to enable realization of such a closed system in engineering applications. For this purpose, we need to find the condition for having at least a minimum level of required mutual information $I(S^C_n;E^O_n)$ between the context system; $S^C_n$; and Outer Environment; $E^O_n$; to realize an informationally closed system at state $n+1$. 

To derive such a constraint for mutual information in the context of informational closure, we will employ the entropy and information formulas while observing the bounded definition of informational closure. To do so, we assume that the informationally closed system is the sole sender and receiver of the information at different time steps as there will be no new information coming in and out of the closed system's boundaries. Accordingly, we consider the system at state $n$; $S^C_n$; as the sender of the information, and the system at state $n+1$; as the receiver of such information. From Equation \ref{eq:mutual-info-entropy}, we can write:
\begin{dmath}
    \label{eq:eq11_0}
    I(S^C_{n+1}; E^O_{n}, S^C_{n}) = H(S^C_{n+1})-H(S^C_{n+1}|S^C_{n}, E^O_n) 
\end{dmath}

If we use Equation \ref{eq:cond-mutual-info} and substitute the last entropy in Equation \ref{eq:eq11_0}, we can rewrite Equation \ref{eq:eq11_0} as follows:

\begin{dmath}
    \label{eq:eq11_1}
    I(S^C_{n+1}; E^O_{n}, S^C_{n}) = I(S^C_{n+1};S^C_n)+ H(S^C_{n+1}|S^C_{n}) + I(S^C_{n+1};E^O_n|S^C_{n}) - H(S^C_{n+1}|S^C_{n})
\end{dmath}

We can now define the mutual information between these systems and their environment $E^O_n$ using Equation \ref{eq:eq11_1} and chain rule as follows\cite{krakauer2020information}:

\begin{dmath}
    \label{eq:eq11}
    I(S^C_{n+1}; E^O_{n}, S^C_{n}) = I(S^C_{n+1}; S^C_{n}) + I(S^C_{n+1}; E^O_{n}|S^C_{n})\\
    = I(S^C_{n+1};E^O_{n}) + I(S^C_{n+1}; S^C_{n}|E^O_{n})
\end{dmath}

Equation \ref{eq:eq11} is derived using the chain rule, as well as conditional and mutual information formulas. For informationally closed systems, from Definition \ref{info-info-def}, we have  $I(S^C_{n+1}; E^O_{n}|S^C_{n}) = 0$. This condition depicts that for an informationally closed system, the amount of mutual information between $S^C_{n+1}$ and $E^O_n|S^C_n$ should become zero. Therefore, the next state of the system only relies on a portion of the information from its environment that is shared with the system at its current state. Incorporating the definition of the closed system into Equation \ref{eq:eq11}, and utilizing the fact that information cannot be negative, we have the following inequality:

\begin{equation}
    \label{inequality}
    I(S^C_{n+1} ; S^C_n) \geq I(S^C_{n+1} ; S^C_n|E^O_n)
\end{equation}
    
To further decompose this inequality, we change information into entropy using Equation \ref{eq:mutual-info-entropy}. We replace $I(S^C_{n+1} ; S^C_n)$ with  $H(S^C_{n+1}) + H(S^C_n) - H(S^C_{n+1}, S^C_n)$. We also replace $I (S^C_{n+1} ; S^C_n|E^O_n)$ with $H(S^C_{n+1}) + H(S^C_n|E^O_n) - H(S^C_{n+1}, S^C_n|E^O_n)$ and incorporate them into Equation \ref{inequality}. Therefore, we have:

\begin{dmath}
    \label{entropy_inequality}
    H(S^C_{n+1}) + H(S^C_n) - H(S^C_{n+1}, S^C_n) \geq H(S^C_{n+1}) + H(S^C_n|E^O_n) - H(S^C_{n+1}, S^C_n|E^O_n)
\end{dmath}

Based on the chain rule, we can provide relationships between mutual and conditional entropy for $S^C_n$ and $E^O_n$. In other words, we have $H(E^O_n) +H(S^C_n |E^O_n) = H(S^C_n)+H(E^O_n|S^C_n) = H(S^C_n, E^O_n)$. The relation between mutual and conditional entropy is similar to that of probability. We can achieve mutual entropy of two systems using the summation of the entropy of one system and the entropy of the second system given the first system. Using simple algebra, if $H(S^C_n)$ is replaced with $H(E^O_n) + H(S^C_n|E^O_n) - H(E^O_n|S^C_n)$ and substitute it into Equation \ref{entropy_inequality}, we have:

\begin{dmath}
\label{eq:minimu-mutual-info-derived}
    H(E^O_n)-H(E^O_n|S^C_n) \geq H(S^C_{n+1}, S^C_n) - H(S^C_{n+1}, S^C_n|E^O_n)
\end{dmath}

Equation \ref{eq:minimu-mutual-info-derived} is the entropy representation of Equation \ref{inequality}. Now to simplify Equation \ref{eq:minimu-mutual-info-derived}, we utilize Equation \ref{eq:eq6} where we have $H(E^O_n|S^C_n) = H(S^C_n,E^O_n) -H(S^C_n) $. Replacing $H(E^O_n|S^C_n)$ with $H(S^C_n,E^O_n) -H(S^C_n)$, and substituting $H(S^C_n) + H(E^O_n) - H(S^C_n,E^O_n)$ with $I(S^C_n;E^O_n)$ in Equation \ref{eq:minimu-mutual-info-derived}, we will get the following inequality for minimum mutual information between $S^C_n$ and $E^O_n$.\\

\begin{theorem}[\textbf{Inequality for mutual information in closure}]
\label{eq:minimu-mutual-info}
   \[ I(S^C_n; E^O_n) \geq H(S^C_{n+1}, S^C_n) - H(S^C_{n+1}, S^C_n|E^O_n)\]
\end{theorem}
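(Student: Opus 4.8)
The plan is to recognize the right-hand side of the claimed inequality as a single \emph{grouped} mutual information and then apply the chain rule together with the informational-closure hypothesis of Definition \ref{info-info-def}. First I would observe that, applying the identity $I(A;B) = H(A) - H(A \mid B)$ from Equation \ref{eq:mutual-info-entropy} with the pair $(S^C_{n+1}, S^C_n)$ treated as one composite variable $A$ and $B = E^O_n$,
\[
H(S^C_{n+1}, S^C_n) - H(S^C_{n+1}, S^C_n \mid E^O_n) = I\bigl(S^C_{n+1}, S^C_n \,;\, E^O_n\bigr),
\]
so the theorem is equivalent to the cleaner statement $I(S^C_n ; E^O_n) \geq I(S^C_{n+1}, S^C_n ; E^O_n)$.

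Next I would decompose the grouped mutual information with the chain rule,
\[
I\bigl(S^C_{n+1}, S^C_n \,;\, E^O_n\bigr) = I\bigl(S^C_n \,;\, E^O_n\bigr) + I\bigl(S^C_{n+1} \,;\, E^O_n \mid S^C_n\bigr),
\]
an identity that follows by combining Equations \ref{eq:mutual-info-entropy} and \ref{eq:cond-mutual-info-entropy}. By Definition \ref{info-info-def}, informational closure of $S^C$ at state $n$ is exactly the statement $I(S^C_{n+1} ; E^O_n \mid S^C_n) = 0$; substituting this collapses the second summand and yields $I(S^C_{n+1}, S^C_n ; E^O_n) = I(S^C_n ; E^O_n)$, so the asserted inequality holds --- in fact with equality. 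As an alternative, one can follow the longer route already laid out in the derivation preceding the statement: begin from the two chain-rule expansions of $I(S^C_{n+1}; E^O_n, S^C_n)$ in Equation \ref{eq:eq11}, use closure to drop $I(S^C_{n+1}; E^O_n \mid S^C_n)$, invoke $I(S^C_{n+1}; E^O_n) \geq 0$ to obtain Equation \ref{inequality}, and then rewrite every mutual information in terms of joint entropies via Equations \ref{eq:mutual-info-entropy} and \ref{eq:eq6}, cancelling the repeated $H(S^C_{n+1})$ terms and regrouping $H(S^C_n) + H(E^O_n) - H(S^C_n, E^O_n)$ as $I(S^C_n ; E^O_n)$.

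The statement is essentially an identity, so there is no deep obstacle here; the one place that requires care --- and where the longer entropy-manipulation route is easy to slip on --- is the bookkeeping of the conditioning when splitting the grouped mutual information, together with the harmless but necessary use of $H(S^C_{n+1}, S^C_n) = H(S^C_n, S^C_{n+1})$ so that the grouping matches the ordering written in the theorem. Beyond Definition \ref{info-info-def} and the standard entropy identities recorded in the Background section, only the non-negativity of mutual information is used, and even that becomes superfluous once the equality is established.
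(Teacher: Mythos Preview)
Your proposal is correct. Your primary route --- recognizing the right-hand side as the grouped mutual information $I(S^C_{n+1}, S^C_n; E^O_n)$ and splitting it once with the chain rule --- is genuinely different from, and considerably shorter than, the paper's derivation. The paper instead starts from the \emph{other} grouping $I(S^C_{n+1}; E^O_n, S^C_n)$, writes its two chain-rule expansions (Equation~\ref{eq:eq11}), uses closure to kill $I(S^C_{n+1}; E^O_n \mid S^C_n)$, and then invokes non-negativity of $I(S^C_{n+1}; E^O_n)$ to obtain the intermediate inequality $I(S^C_{n+1}; S^C_n) \geq I(S^C_{n+1}; S^C_n \mid E^O_n)$; from there it unwinds everything into entropies and regroups to reach the theorem. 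Your alternative description accurately summarizes that longer path.

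What your approach buys is not only brevity but also the sharper observation that the statement is in fact an \emph{equality} under Definition~\ref{info-info-def}: the paper's route discards the non-negative term $I(S^C_{n+1}; E^O_n)$ and thereby presents as an inequality what your one-line chain-rule argument shows to be tight. The paper's route, on the other hand, makes more visible the role of the intermediate quantity $I(S^C_{n+1}; S^C_n)$ --- the ``self-information flow'' between consecutive states --- which is the object the surrounding discussion cares about for engineering interpretation. Both arguments rest on the same ingredients (Definition~\ref{info-info-def}, the chain rule, and the identities in Equations~\ref{eq:mutual-info-entropy} and~\ref{eq:cond-mutual-info-entropy}); yours simply chooses the grouping that collapses the computation immediately.
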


\vspace{4mm}
Theorem \ref{eq:minimu-mutual-info} provides the relation for the level of mutual information being presented in the boundary of an informationally closed system. To maintain closure at state $n+1$, the output of the closed system to the environment at state $n$ (i.e., the amount of information transmitted from the system to its environment) should follow Theorem \ref{eq:minimu-mutual-info}. This inequality shows that the capacity of the channel between environment $E^O_n$ and $S^C_n$ should be more than a specific minimum. This lower bound is also dependent on the entropy of the next state of the system; $S^C_{n+1}$. Theorem \ref{eq:minimu-mutual-info} has an edge case scenario where mutual information of $S^C_n$ and $E^O_n$ becomes zero while being informationally closed. The zero mutual information in Theorem \ref{eq:minimu-mutual-info} indicates the two entropy $H(S^C_{n+1}, S^C_n)$ and $H(S^C_{n+1}, S^C_n|E^O_n)$ are equal. This equality means that the system does not send or get any relevant information to or from the current state of the environment; thus the system becomes functionally closed. 

The inequality in Theorem \ref{eq:minimu-mutual-info} ensures that the information transmitted from the system at state $n$ to the system at state $n+1$; $I(S^C_n; S^C_{n+1})$; would be maximal to confirm the closure of information in the closed system boundary. The level of mutual information between the current state of the closed system and its environment will be determined by the difference between the mutual entropy of the system at state $n$ and at the next state $n+1$ as well as the mutual information between the system at the two states $n+1$ and $n$ given $E^O_n$. \\

\textit{This dependency on mutual entropy of $S^C_{n+1}$ and $S^C_n$ indicates that in order to enable informationally closed systems engineering, one needs to capture the space of unknown states for the system in the future given the information of its current states, and current environment}.\\

Therefore, informational closure in SE is tightly interdependent with the ability to identify incomplete information and the states of the closed system in the future. In other words, by employing informational closure in SE, we don't need to have complete information of the system of interest's future states; ${S^0}_{n+1}$. There will be no need to provide a capability to predict 100\% of the the system of interest's states in advance. Rather, it relies on identification of what will be unknown in the next state of the system $H(S^C_{n+1})$ given the current level of information of $S^C_n$ and $S^C_n|E^O_n$. The fact that complete predictability is not a required feature for informational closure makes this concept suitable in the process of bounding intelligence. Intelligent systems may encounter new and/or unpredicted conditions and they are expected to learn, survive, and meet their goals when encountering such conditions. The closedness of information allows systems engineers to bound intelligent systems' environment and create a closed intelligence within the closed system's boundaries that emerges from the high coupling between the system of interest and part of its environment, $E^I$. Using informational closure, systems engineers decide what part of the environment needs to be coupled and modeled with the systems of interest\cite{shadab2022closed}. 

Systems engineers require to recognize and interpret what part of information should be engineered as mutual information between the closed system and its environment. Thus far, we concluded that the level of mutual information between the system and its environment should be present at a certain minimum level for realization of a closed system. However, mutual information, due to its subjective nature, should be determined as an important design choice in different engineering applications. Therefore; to understand the influence of design constraints on the level of mutual information, its upper bound can be captured by specifying a cost function. This cost function can be derived by other properties of the system of interest such as safety, cost, etc. As a result, systems engineers need to do trade-off analysis between $H(S^C_{n+1}, S^C_n)$ and $H(S^C_{n+1}, S^C_n|E^O_n)$ to achieve the inequality in Theorem \ref{eq:minimu-mutual-info}. We can maximize $H(S^C_{n+1}, S^C_n)$ up to a threshold $\delta$ by minimizing $H(S^C_{n+1}, S^C_n|E^O_n)$, where:

\begin{equation}
\begin{split}
    H(S^C_{n+1}, S^C_n) - H(S^C_{n+1}, S^C_n|E^O_n) \leq \\
    I(S^C_{n}; E^O_{n}) \leq \delta
\end{split}
\end{equation}

We argue that $\delta$ would be a design parameter that would depend on several factors. Mutual information between the closed system and its environment helps improve the level of predictability of environment inputs to the system. This factor is a critical measure to determine the upper bound of the mutual information between the closed system and its environment. increasing mutual information between the closed system and its environment has a direct relation with increasing the cost of design and engineering of the system of interest as it adds more complexity and needs more resources to provide this communication channel between the environment and the system. As a result, cost can be considered as constraint to the upper bound of mutual information.

\section{Conclusion}
In this paper, we asserted formalism for two types of closure; functional, and informational; and compared them with each other. We derived constraints to achieve each type of closure in systems. We showed that functional closure is a special form of informational closure. We also argued that intelligent systems can be designed and engineered at multiple levels of abstraction. Some levels are informationally closed; while others are informationally and functionally open, etc. We provided foundation on how the open and closed systems concept can be utilized to engineer different levels of abstractions for such systems. Finally, we elaborated on how we can exert constraints for the level of mutual information required between the next state of the system and the current states of the system and its environment to enable informational closure.

\bibliographystyle{IEEEtranIES}
\bibliography{ref}

\end{document}